\documentclass{article} 
\usepackage{lineno}
\usepackage{iclr2027_conference,times}

\usepackage{amsmath,amsfonts,bm}

\def\eqref#1{equation~\ref{#1}}

\def\1{\bm{1}}

\DeclareMathAlphabet{\mathsfit}{\encodingdefault}{\sfdefault}{m}{sl}
\SetMathAlphabet{\mathsfit}{bold}{\encodingdefault}{\sfdefault}{bx}{n}

\usepackage{hyperref}
\usepackage{url}
\usepackage{tcolorbox}
\usepackage{wrapfig}
\usepackage{booktabs}
\usepackage{tikz}
\usetikzlibrary{arrows.meta,positioning,fit,shapes.geometric}
\usepackage{array}
\tcbuselibrary{breakable}
\usepackage{multirow}
\usepackage{amsthm}
\usepackage[table]{xcolor}
\usepackage{graphicx}
\usepackage{enumitem}

\definecolor{groupgray}{gray}{0.92}
\newcommand{\grouprow}[1]{%
  \rowcolor{groupgray}\multicolumn{8}{l}{\textbf{#1}}\\}
\newcommand{\std}[1]{{\scriptsize$\pm$\,#1}}

\newtheorem{theorem}{Theorem}

\newtheorem{proposition}{Proposition}

\newtheorem{assumption}{Assumption}

\title{WAQS: \underline{W}eight \underline{A}bsorption for \underline{Q}uadratic Probing and Affine \underline{S}teering}

\iclrfinalcopy
\author{Ding Zhu\thanks{Equal Contribution.} \\
Department of Computer Science and Engineering\\
the Ohio State University\\
Columbus, OH 43210, USA \\
\texttt{zhu.3723@osu.edu} \\
\And
Mohammad Mahdi Khalili\footnotemark[1] \\
Department of Computer Science and Engineering\\
the Ohio State University\\
Columbus, OH 43210, USA \\
\texttt{khalili.17@osu.edu} \\
}

\begin{document}

\maketitle

\begin{abstract}
Activation steering provides a lightweight alternative to fine-tuning for controlling large language model behavior. Existing methods, however, often face a trade-off between efficiency and expressiveness: fixed steering vectors incur negligible inference overhead but cannot adapt to the current model input, whereas input-dependent nonlinear interventions provide greater flexibility at additional runtime cost. We introduce \textbf{WAQS} (\textbf{W}eight \textbf{A}bsorption for \textbf{Q}uadratic probing and affine \textbf{S}teering), a framework that obtains state-dependent steering, without additional inference-time computation. WAQS models a concept with a quadratic probe, whose gradient yields an affine, activation-dependent steering direction. We show that this affine intervention can be folded exactly into adjacent linear transformations through an offline weight update. Moreover, we prove that quadratic functions form the maximal class of smooth concept scores whose gradient-based interventions are affine and therefore exactly absorbable into linear layers. Under a Gaussian representation model, our formulation admits an interpretation as the difference of class-conditional Mahalanobis pulls and naturally generalizes linear discriminant and difference-in-means steering. To scale to high-dimensional representations, we introduce a low-rank parameterization of the quadratic term. Across truthfulness, refusal, and sycophancy steering tasks, quadratic steering matches or improves upon constant linear steering while preserving model utility, with the absorbed implementation introducing no additional inference-time operations. Our results suggest that quadratic probes provide a practical middle ground between static linear steering and computationally expensive dynamic nonlinear interventions.
\end{abstract}

\section{Introduction}
Large Language Models (LLMs) require steering away from harmful, untruthful, or sycophantic behavior after pretraining, and doing so cheaply and precisely is central to deploying them safely. Modern alignment techniques fall into two methodologies: training-based alignment and training-free, inference-time intervention. Training-based alignment relies primarily on Supervised Fine-Tuning (SFT) and Reinforcement Learning methods \citep{ouyang2022training, rlhf_survey2023, wu2024beta, shao2024deepseekmath}. While effective, these methods are computationally expensive, depend on noisy human feedback, and inherit the biases encoded in a learned reward function \citep{casper2023open}; verifiable-reward variants avoid this bias but only apply where correctness is mechanically checkable, such as math, code, and formal reasoning. Training-free intervention, which instead adjusts model behavior at inference time without any gradient update, is the focus of this paper.

Within training-free intervention, representation engineering faces a fundamental trade-off between adaptivity and inference cost. \textit{Static Steering} identifies a single static concept vector and adds it to the model's activation at almost zero inference cost \citep{zou2023representation}. This static vector, however, assumes the concept boundary is a flat hyperplane in representation space; because most concepts do not satisfy this assumption, linear steering under- or over-corrects activations that lie off this hyperplane and degrades generation quality under strong intervention. Recent work instead models the concept boundary with deeper, curved classifiers and computes an input-dependent steering direction from them at generation time \citep{raval2026curveballsteeringrightdirection, dmdsp2026, hoscilowicz2024non}. This \textit{non-linear steering} corrects activations more precisely, but existing analyses treat its cost as unavoidable: because the steering direction is an arbitrary, non-affine function of the current activation, it must be recomputed from a full forward pass at every generated token. 


We resolve this tension with a simple observation: although a quadratic concept probe is a non-linear function of the activation, its gradient is an \textit{affine} function of that same activation. 
A quadratic probe is therefore curved enough to represent a non-flat concept boundary, yet inducing a steering direction that is linear enough to be written as a fixed weight update rather than recomputed at every step.

Building on this observation, we introduce \textbf{WAQS} (\underline{W}eight \underline{A}bsorption for \underline{Q}uadratic probing and affine \underline{S}teering), a framework that turns a curved, input-dependent steering direction into a static weight update. \textbf{WAQS} trains a quadratic probe to score a target concept, derives the steering update as the probe's gradient which is an affine function of input, and folds this affine map into the neighboring weight matrix, or layer norm. In contrast to static steering, \textbf{WAQS}'s steering direction still adapts to each input's position relative to the target concept; in contrast to dynamic non-linear steering, this adaptation is precomputed once, offline, so it costs nothing extra at inference time.

Our contributions can be summarized as follows:
\begin{itemize}[leftmargin=*]
\item \textbf{Maximal function class and statistical interpretation.} 
We discuss that a smooth probe's gradient-based intervention is affine in the activation if and only if the probe is a polynomial of degree at most two. Quadratic probes are thus not merely one absorbable family but the entire absorbable class; any higher-order probe reintroduces the recomputation cost that motivated dynamic non-linear steering. Under a Gaussian representation model, the optimal quadratic probe admits a closed form, and its induced steering direction decomposes into a difference of class-conditional Mahalanobis pulls. Moreover, under the Gaussian assumption, we show that difference-in-means steering method \citep{refusal_direction} is a special case of quadratic probing.
\item \textbf{Weight absorption.} We show that gradient-based steering with a quadratic probe can be folded exactly into the adjacent linear projection, attention layer, and normalization layer through an offline weight update. The result is a standard checkpoint: it requires no custom inference code, no forward hooks, and works out of the box with existing serving stacks, quantization pipelines, and distribution formats.
\end{itemize}

\section{Problem Setup and Theoretical Analysis}

\begin{wrapfigure}[22]{r}{0.38\textwidth} \centering \includegraphics[width=0.38\textwidth]{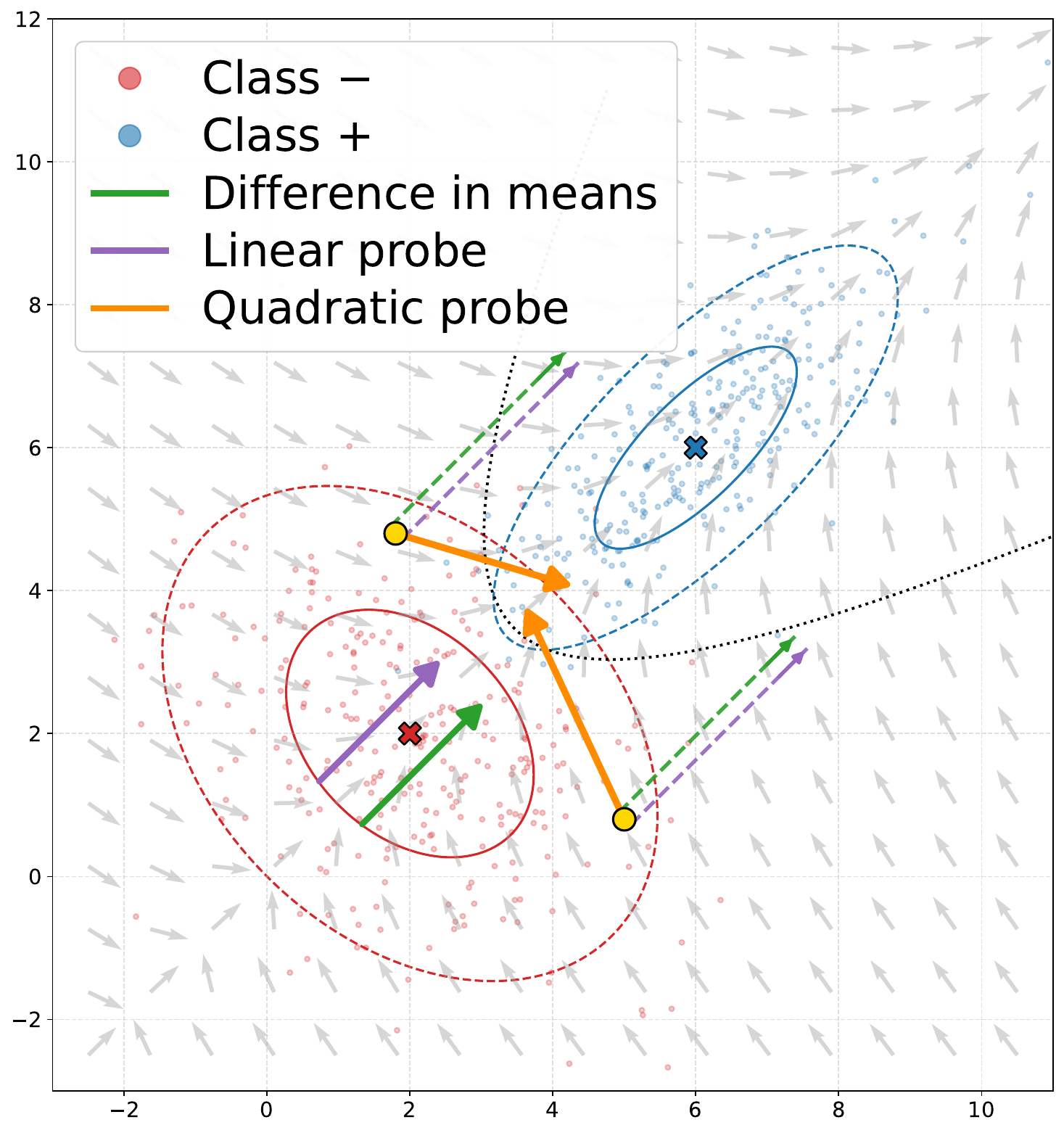} \caption{\textbf{Only the quadratic gradient adapts to the activation.} The quadratic probe's gradient field varies with position.} \label{fig:direction} \end{wrapfigure}

\label{sec:gradient_steering}

Let ${x} \in \mathbb{R}^D$ represent the hidden state vector (activation) extracted from a specific layer $l$ within an LLM, where $D$ denotes the hidden dimension size. Linear probing assumes a linear hyperplane as the concept boundary $
    f_L(x) = w^\top x +b$, 
where $w \in \mathbb{R}^D$ is the probe weight vector and $b \in \mathbb{R}$ is a scalar bias.
To capture non-linear, pairwise interactions between activation values, we instead define a degree-2 polynomial (quadratic) probe function $f( {x}): \mathbb{R}^D \to \mathbb{R}$:
\begin{equation}\label{eq:probe}
f_Q({x}) = {x}^\top {W}_p {x} + {w}_p^\top {x} + b_p,
\end{equation}
where ${W}_p \in \mathbb{R}^{D \times D}$ is a symmetric matrix\footnote{Without loss of generality, $W_p$ can be assumed symmetric, because the antisymmetric part satisfies $x^TW_p'x = 0$ for all $x$.} capturing neuron-to-neuron correlation weights, ${w}_p \in \mathbb{R}^D$ represents the linear feature bias, and $b_p \in \mathbb{R}$ is the scalar bias. We train the probe via logistic loss to classify the presence or absence of a target semantic concept within the internal representations. To steer the model's behavior during inference, we perform an intervention on ${x}$ to maximize or minimize the concept score $f_Q({x})$. To identify the direction of steepest ascent along this curved decision boundary, we compute the gradient of the probe function with respect to ${x}$: $\nabla_{{x}} f_Q({x}) = 2{W}_p{x} + {w}_p$. \citet{iti} adopt the same gradient-based direction.

Because the derivative of a quadratic function is affine, the resulting steering direction $\nabla_{{x}} f({x})$ is \textit{dynamically context-dependent}. The steering vector adapts at every token generation step based on the model's current hidden state $x$, replacing the fixed vector used by linear steering. The runtime activation update rule is:
\begin{equation}\label{eq:steer}
{x}_{\text{steered}} = {x} + \alpha \nabla_{{x}} f({x}) = {x} + \alpha (2{W}_p{x} + {w}_p)
\end{equation}
where $\alpha \in \mathbb{R}$ is a scalar hyperparameter that controls the steering intensity: $\alpha > 0$ amplifies the concept and $\alpha < 0$ suppresses it.

\subsection{Exact affine absorbability}
\label{sec:absorbability}

Eq.~\ref{eq:steer} defines a state-dependent intervention: the update applied to $x$ depends on $x$ itself, so no single vector describes what the steering does. Write this intervention as the map $T_\alpha(x) := x + \alpha \nabla_x f_Q(x)$. The standard reading holds that such an intervention must run during the forward pass and therefore belongs to the dynamic, runtime-cost regime of prior work. We show this reading is mistaken for the quadratic case: there is no inference-time overhead for quadratic probing.

A map can fold into a linear layer precisely when it is affine. In the quadratic probing case, $T_\alpha(x) = M_{\text{steer}} x + b_{\text{steer}}$, where $M_{\text{steer}} = I + 2\alpha W_p$, $b_{\text{steer}} = \alpha w_p$, and $I \in \mathbb{R}^{D \times D}$ is the identity matrix. Therefore, as we will see in the next part, $T_\alpha(x)$ can be absorbed into the model's linear layer.   



The next proposition shows that quadratic functions are the only functions that leads to affine map for steering which is absorbable. 

\begin{proposition}[Maximality]
\label{prop:maximal}
Let $f \in \mathcal{C}^{2}(\mathbb{R}^{D})$ and let $T_\alpha(x) := x + \alpha \nabla_x f(x)$.
Then $T_\alpha$ is an affine map of $x$ for some $\alpha \neq 0$ if and only if $f$ is a
polynomial of degree at most two, in which case
$T_\alpha(x) = (I + 2\alpha W_p)x + \alpha w_p$ as in \eqref{eq:steer}.
\end{proposition}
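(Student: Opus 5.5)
The plan is to prove the two directions separately. The reverse direction is a computation. The forward direction reduces to showing that a $\mathcal{C}^2$ function with constant Hessian is quadratic.

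\emph{($\Leftarrow$)} Suppose $f(x) = x^\top W x + w^\top x + b$. We may take $W$ symmetric, as in the footnote to \eqref{eq:probe}. Then $\nabla_x f(x) = 2Wx + w$. Hence for every $\alpha$, including every $\alpha \neq 0$,
\[
T_\alpha(x) = (I + 2\alpha W)x + \alpha w ,
\]
which is affine and matches \eqref{eq:steer} with $W_p = W$ and $w_p = w$.

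\emph{($\Rightarrow$)} Fix $\alpha \neq 0$ and suppose $T_\alpha(x) = Mx + c$ for all $x \in \mathbb{R}^D$. Rearranging gives
\[
\nabla_x f(x) = \alpha^{-1}\bigl((M - I)x + c\bigr) =: Ax + w .
\]
So $\nabla f$ is affine. Since $f \in \mathcal{C}^2$, we can differentiate and get $\nabla^2 f(x) = A$ for all $x$. The Hessian of a $\mathcal{C}^2$ function is symmetric, so $A = A^\top$.

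\emph{Main step: recovering $f$ from its gradient.} This is where care is needed, because we must show $f$ is exactly the candidate quadratic. Define
\[
g(x) = f(x) - \tfrac12 x^\top A x - w^\top x .
\]
Using the symmetry of $A$, $\nabla g(x) = Ax + w - Ax - w = 0$ on all of $\mathbb{R}^D$. Since $\mathbb{R}^D$ is connected, for instance by the fundamental theorem of calculus along segments, $g$ equals some constant $b$. Therefore
\[
f(x) = x^\top W_p x + w_p^\top x + b ,\qquad W_p = \tfrac12 A,\quad w_p = w ,
\]
a polynomial of degree at most two. Substituting back, $M = I + \alpha A = I + 2\alpha W_p$ and $c = \alpha w_p$, which is the stated form.

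The only real obstacle is the symmetry of $A$. Without it, $\nabla\bigl(\tfrac12 x^\top A x\bigr) = \tfrac12(A + A^\top)x \neq Ax$, and $g$ would not have zero gradient. Symmetry follows from Schwarz's theorem on equality of mixed partials for $\mathcal{C}^2$ functions. This is exactly where the $\mathcal{C}^2$ hypothesis is used.

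The condition holding for some $\alpha \neq 0$ is what allows dividing by $\alpha$. Once $\nabla f$ is affine, the argument shows $T_\alpha$ is affine for every $\alpha$, so "for some" and "for all" nonzero $\alpha$ are equivalent.
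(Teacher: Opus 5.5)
Your proof is correct and follows the paper's route: divide by $\alpha$ to see that $\nabla f$ is affine, conclude that the Hessian is a constant symmetric matrix, and reconstruct $f$ as a quadratic with $W_p = \tfrac12 H$. The only difference is cosmetic. You recover $f$ by subtracting the candidate quadratic and showing the remainder has zero gradient, where the paper invokes an exact second-order Taylor expansion; as a minor aside, the symmetry of $A$ does not really depend on the $\mathcal{C}^2$ hypothesis, since an affine gradient already forces $f$ to be $\mathcal{C}^\infty$.
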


\begin{proof}
(Only If) Suppose $T_\alpha$ is affine for
some $\alpha \neq 0$. Then $\nabla_x f(x) = \alpha^{-1}\left(T_\alpha(x) - x\right)$ is affine in
$x$, hence the Hessian $\nabla^2_x f$ is a constant matrix $H$. A second-order Taylor expansion
 then gives $f(x) = \tfrac12 x^{\top} H x + c^{\top} x + b$, i.e.\ $f$ is
quadratic with $W_p = \tfrac12 H$.

(If) Suppose $f$ is a quadratic function. Then, $T_\alpha = M_{\text{steer}} x + b_{\text{steer}}$.
\end{proof}

Two consequences follow. First, the runtime cost incurred by nonlinear-probe steering is intrinsic to the function class, not to any particular implementation; no engineering effort removes it. Second, any modification that reintroduces activation dependence outside the gradient itself falls outside the absorbable class by the same argument.

\subsection{Statistical interpretation via Gaussian Discriminant Analysis}
\label{sec:gda}
In this part, we provide intuition why quadratic probing is going to help with steering. 
\begin{assumption}[Gaussian Distribution Assumption]\label{assumption:Gaussian}
    The hidden state of positive and negative class follows the Gaussian distribution:
    \begin{equation}
        x|y=+1 \sim \mathcal{N}(\mu_+,\Sigma_+), \quad x|y=-1 \sim \mathcal{N}(\mu_-,\Sigma_-)
    \end{equation}
    where $\mu_+,\mu_- \in \mathbb{R}^D$ are the class means and $\Sigma_+,\Sigma_- \in \mathbb{R}^{D \times D}$ are \emph{positive-definite} class covariances, so that $\Sigma_+^{-1}, \Sigma_-^{-1}$ exist.
\end{assumption}

\begin{theorem}\label{theo:GDA}
    Under the assumption~\ref{assumption:Gaussian}, solving for $f_Q(x)$ by Gaussian Discriminant Analysis over all positive samples $x_+$ and negative samples $x_-$ admits a closed-form solution.
    \begin{equation}
        W_p = -\frac{1}{2}(\Sigma_+^{-1} - \Sigma_-^{-1}),\quad  w_p = \Sigma_+^{-1}\mu_+ - \Sigma_-^{-1}\mu_-,
    \end{equation}
    \begin{equation}
        b_p = -\frac{1}{2}\Bigl(\mu_+^\top\Sigma_+^{-1}\mu_+ - \mu_-^\top\Sigma_-^{-1}\mu_- + \log\lvert\Sigma_+\rvert - \log\lvert\Sigma_-\rvert\Bigr) + \log\frac{p(y_+)}{p(y_-)}.
    \end{equation}
\end{theorem}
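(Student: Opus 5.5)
The plan is to read ``solving for $f_Q$ by Gaussian Discriminant Analysis'' as taking $f_Q$ to be the Bayes log-posterior odds under Assumption~\ref{assumption:Gaussian}:
\[
f_Q(x) = \log\frac{p(y=+1\mid x)}{p(y=-1\mid x)} .
\]
This is exactly the quantity that the logistic loss in Section~\ref{sec:gradient_steering} targets, since the logistic model's logit is the log-odds. By Bayes' rule the evidence $p(x)$ cancels, and we get
\[
f_Q(x)=\log p(x\mid y=+1)-\log p(x\mid y=-1)+\log\frac{p(y_+)}{p(y_-)}.
\]
So the task reduces to substituting the two Gaussian densities and sorting the result by degree in $x$.

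Next, write each class-conditional log-density explicitly:
\[
\log p(x\mid y=\pm1)=-\tfrac{D}{2}\log 2\pi-\tfrac12\log\lvert\Sigma_\pm\rvert-\tfrac12 (x-\mu_\pm)^\top\Sigma_\pm^{-1}(x-\mu_\pm).
\]
The constant $-\tfrac D2\log 2\pi$ cancels in the difference. Positive-definiteness guarantees that $\Sigma_\pm^{-1}$ exists and that the $\log\lvert\Sigma_\pm\rvert$ terms are well defined. Because $\Sigma_\pm^{-1}$ is symmetric, each Mahalanobis term expands as
\[
x^\top\Sigma_\pm^{-1}x-2\mu_\pm^\top\Sigma_\pm^{-1}x+\mu_\pm^\top\Sigma_\pm^{-1}\mu_\pm .
\]
Collecting terms then gives the three coefficients:
\begin{itemize}[leftmargin=*]
\item The quadratic part is $-\tfrac12 x^\top(\Sigma_+^{-1}-\Sigma_-^{-1})x$. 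This identifies $W_p$, which is symmetric, consistent with the convention stated after \eqref{eq:probe}.
\item The linear part is $(\Sigma_+^{-1}\mu_+-\Sigma_-^{-1}\mu_-)^\top x$. This identifies $w_p$.
\item The constants are $-\tfrac12(\mu_+^\top\Sigma_+^{-1}\mu_+-\mu_-^\top\Sigma_-^{-1}\mu_-)$, together with $-\tfrac12(\log\lvert\Sigma_+\rvert-\log\lvert\Sigma_-\rvert)$ and the prior log-ratio. Together these give $b_p$.
\end{itemize}

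The only delicate point is interpretive rather than computational. One must justify that the GDA solution is this plug-in log-odds: fit $\mu_\pm,\Sigma_\pm,p(y_\pm)$ by maximum likelihood on the positive and negative samples (sample means, class covariances and class frequencies), then form the log-odds. One should also check that this function really has the form \eqref{eq:probe}, so that uniqueness of the coefficients follows by matching polynomial coefficients, with $W_p$ taken symmetric. I would state explicitly that the closed form is expressed in the population parameters, or equivalently in their ML estimates. Everything else is routine algebra.
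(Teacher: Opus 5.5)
Your proposal is correct and follows the paper's proof essentially step for step. You identify $f_Q$ with the Bayes log-posterior odds, let $p(x)$ and the $-\tfrac{D}{2}\log 2\pi$ constants cancel, expand the two Mahalanobis quadratics, and match the quadratic, linear and constant coefficients against Eq.~\ref{eq:probe}, while your side remarks (logistic-loss consistency, uniqueness via symmetric $W_p$, and the ML plug-in reading, which also requires the sample covariances to be invertible) go slightly beyond the paper without changing the argument.
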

We defer the proof to the Appendix~\ref{appx:closed_form_sol}. Theorem~\ref{theo:GDA} implies that estimating the decision boundary requires quadratic probing whenever the two concept classes have different covariance matrices; when $\Sigma_+ = \Sigma_-$, linear probing suffices. When $\Sigma_+ = \Sigma_- = I$, difference-in-means method gives the right steering vector under Gaussian assumption.  

Under Theorem~\ref{theo:GDA}, the steering vector can be calculated as,
\begin{equation}
\nabla_x f_Q(x) \;=\; \Sigma_+^{-1}(\mu_+ - x) \;-\; \Sigma_-^{-1}(\mu_- - x).
\label{eq:pull}
\end{equation}

Eq.~\eqref{eq:pull} expresses the intervention as a difference of two \emph{Mahalanobis pulls}: a pull toward the positive centroid and a push away from the negative centroid, each shaped by the inverse covariance of its corresponding class. The inverse covariance gives greater weight to directions with lower within-class variance. Consequently, both the magnitude and direction of the steering vector depend on the activation $x$ and the geometry of the two class distributions. When $\Sigma_+=\Sigma_-=\Sigma$, the $x$-dependent terms cancel, yielding the constant vector $\Sigma^{-1}(\mu_+-\mu_-)$, the classical Fisher linear discriminant direction which is also illustrate in ~\citet{marks2023geometry}. Figure~\ref{fig:direction} illustrates that moving negative activations along a fixed difference-in-means or linear-probing direction does not necessarily move them into the positive cluster, leaving many points far from it. In contrast, the quadratic gradient field adapts to each activation and bends toward the interior of the positive cluster. 

\section{Methodology}
\label{sec:method}
In this part, we describe how the steering vector generated by quadratic probing can be absorbed into the transformer's static weights. We further discuss low rank approximation for quadratic probing. 
\subsection{Weight absorption}
\label{sec:weight_folding}
\begin{figure}[t]
\centering
\resizebox{\linewidth}{!}{%
\includegraphics[width=0.5\linewidth]{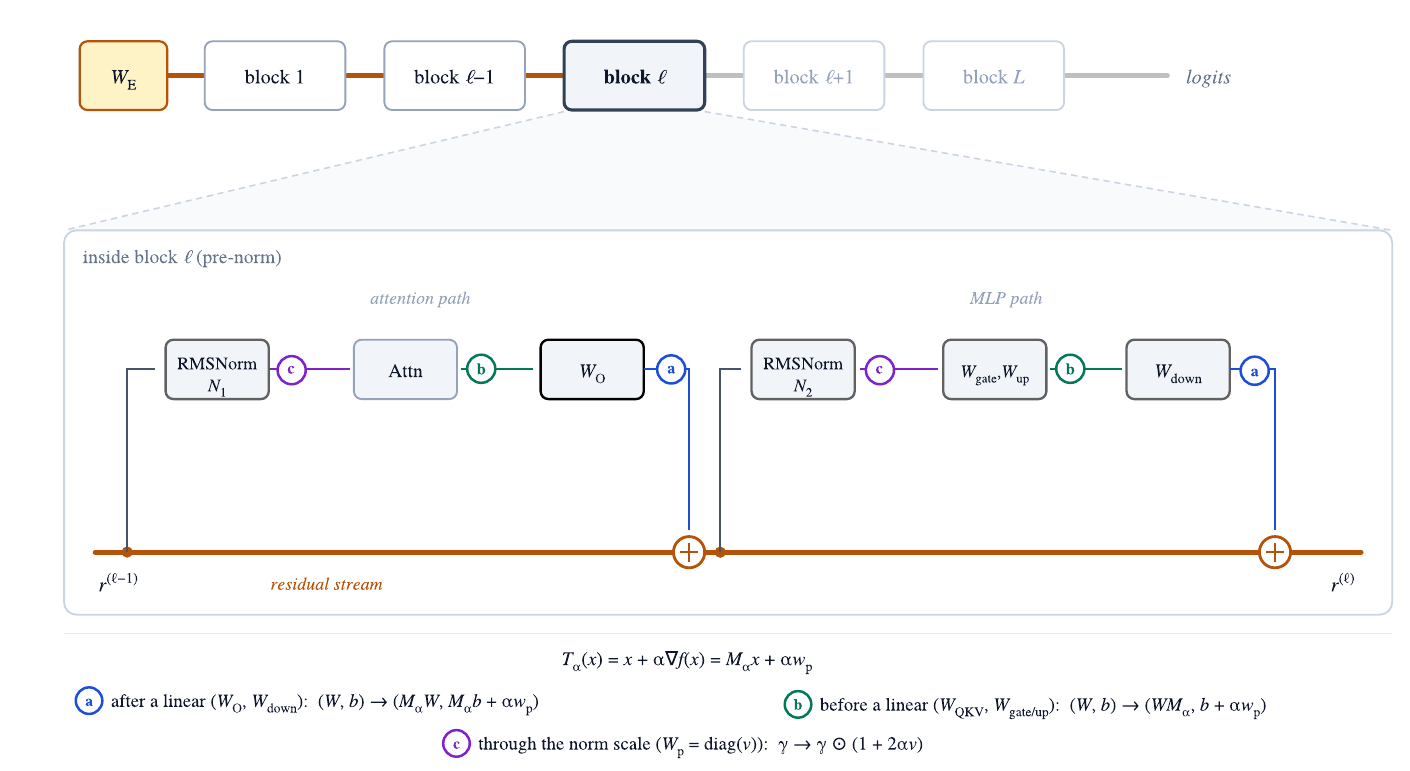}
}
\caption{\textbf{Weight absorption folds the steering map $M_{\text{steer}} = I + 2\alpha W_p$ and bias $b_{\text{steer}} = \alpha w_p$ into the surrounding static weights}, for every point a quadratic probe can be attached to within a transformer block. \textbf{(a)} after a linear projection, \textbf{(b)} before a linear projection, and \textbf{(c)} through an RMS-norm/LayerNorm scale. In every case the runtime rule $T_\alpha(x) = x + \alpha\nabla f_Q(x) = M_{\text{steer}} x + b_{\text{steer}}$ is realized entirely offline, at zero inference-time cost.}
\label{fig:weight_absorption}
\end{figure}

A dynamic steering vector in general leads to an additional inference-time latency cost due to calculating the steering vector during every forward pass. We exploit the affine nature of the gradient in quadratic probing to fold  the affine operation into the transformer's static weight matrices. Depending on whether the steering happens before or after a linear projection, we can absorb $W_p$ and $w_p$ into the parameters of the linear projection as shown in Fig~\ref{fig:weight_absorption}.

\noindent\textbf{(a) Steering and probing after projection layer:} By expanding and rearranging Eq.~\ref{eq:steer}, we have,
${x}_{\text{steered}} =  {x} + 2\alpha {W}_p {x} + \alpha {w}_p = ( {I} + 2\alpha {W}_p) {x} + \alpha {w}_p = {M}_{\text{steer}} {x} +  {b}_{\text{steer}} $ with $ {I} \in \mathbb{R}^{D \times D}$ being the Identity matrix. 
Within a standard transformer architecture, a linear projection layer (such as an attention projection matrix) produces the hidden state $ {x}$ at layer $l$ by acting on a precursor state $ {h} \in \mathbb{R}^K$:
$
 {x} =  {W}_{\text{layer}} {h} +  {b}_{\text{layer}}
$. 
Using this equation, our steering rule yields:
\begin{equation*}
 {x}_{\text{steered}} =  {M}_{\text{steer}}( {W}_{\text{layer}} {h} +  {b}_{\text{layer}}) +  {b}_{\text{steer}}
 = ( {M}_{\text{steer}} {W}_{\text{layer}}) {h} + ( {M}_{\text{steer}} {b}_{\text{layer}} +  {b}_{\text{steer}})
\end{equation*}
By grouping terms, we can directly compute a modified weight matrix $ {W}_{\text{new}}$ and bias vector $ {b}_{\text{new}}$ for the target transformer layer:
\begin{equation}
 {W}_{\text{new}} = ( {I} + 2\alpha {W}_p) {W}_{\text{layer}}, 
 {b}_{\text{new}} = ( {I} + 2\alpha {W}_p) {b}_{\text{layer}} + \alpha {w}_p
\end{equation}
By replacing the original model weights with $ {W}_{\text{new}}$ and $ {b}_{\text{new}}$ offline, the model natively generates steered text with zero inference overhead and full compatibility with existing serving engines.

\noindent\textbf{(b) Steering and probing before the projection layer:} To apply steering vector to hidden state $h$ before a projection layer, we apply $W_{\text{layer}}, b_{\text{layer}}$ after applying $M_{\text{steer}}, b_{\text{steer}}$.
\begin{equation}
\begin{aligned}
    h_{\text{steer}} &= M_{\text{steer}}h + b_{\text{steer}}    \\
    x &= W_{\text{layer}} h_{\text{steer}} + b_{\text{layer}}  = W_{\text{layer}}M_{\text{steer}} h + (b_{\text{layer}} + W_{\text{layer}}b_{\text{steer}})
\end{aligned}
\end{equation}

As a result, we can write $ {W}_{\text{new}}$ and $ {b}_{\text{new}}$ as:
${W}_{\text{new}}={W}_{\text{layer}}M_{\text{steer}}, 
{b}_{\text{new}}={b}_{\text{layer}} +   {W}_{\text{layer}} {b}_{\text{steer}}
$

\noindent \textbf{(c) Quadratic Probing for RMS-norm/layer norm:}
Given a linear layer $x=W_{\text{layer}}h+b_{\text{layer}}$, steering with quadratic probing on $x$ modifies $W_{\text{layer}}$ directly. If a non-linear operation instead produces $x$ (e.g., RMS-norm in Llama, or an MLP activation function), we can absorbed $M_{\text{steer}}$ into the next projection layer or Attention Layer. In Figure \ref{fig:weight_absorption}, if the intervention happens at point ``\textbf{c}'', $M_{\text{steer}}$ can be absorbed in $W_K,W_Q,W_V$ (if ``\textbf{c}'' is before Attention Layer) or in $W_{up}$ (if ``\textbf{c}'' is before MLP layer).  However, if we have to do weight absorption on the RMS-norm, then with a slight modification, $M_{\text{steer}}$ also can be absorbed into RMS-norm and LayerNorm: restricting $W_p$ to a diagonal matrix allows $M_{\text{steer}}$ to be absorbed directly into the norm layer's scale parameter. Let $x=\frac{h}{||h||_2}\odot\gamma=\text{diag}(\gamma) \cdot \frac{h}{||h||_2}$ denote an RMS-norm layer, where $\odot$ is the element-wise product, $\gamma \in \mathbb{R}^D$, and $\text{diag}(\gamma)$ denotes the diagonal matrix with $\gamma$ on its diagonal. The probe function $f({x}) = {x}^\top \text{diag}(v) {x} + {w}_p^\top {x} + b_p$ gives the steering vector $2\cdot\text{diag}(v) \cdot {x} + {w}_p$, so:
\begin{eqnarray*} x_{\text{steered}} &=&x+2\alpha\operatorname{diag}(v)x+w_p=\operatorname{diag}(\gamma)\frac{h}{\|h\|_2} +2\alpha \cdot \operatorname{diag}(v)\operatorname{diag}(\gamma) \frac{h}{\|h\|_2} +\alpha w_p\\ &=&\frac{h}{\|h\|_2}\odot \left[\gamma+\left(2\alpha\cdot v\odot\gamma\right)\right]+\alpha w_p=\frac{h}{\|h\|_2}\odot \left[\gamma\odot(1+2\alpha\cdot v)\right]+\alpha w_p=\frac{h}{\|h\|_2}\odot\hat{\gamma}+\alpha w_p, \end{eqnarray*}
$\hat{\gamma} = (\gamma\odot (2\alpha \cdot v+1))$. Steering with quadratic probing under a diagonal matrix therefore introduces no overhead relative to linear probing; the same approach applies to LayerNorm.

\subsection{Quadratic Probe Estimation}
\label{subsec:qb_estimator}
\paragraph{Low-rank discriminative probe.} For modern production Large Language Models, the hidden dimension $D$ reaches into the thousands (e.g., $D = 4096$ for Llama-3-8B). In these regimes, the full-rank quadratic weight matrix $ {W}_p \in \mathbb{R}^{D \times D}$ encounters a severe parameter explosion of $\mathcal{O}(D^2)$ elements ($1.67 \times 10^7$ parameters for $D=4096$). This memory footprint makes probe training unstable and renders the offline matrix multiplication $( {I} + 2\alpha {W}_p) {W}_{\text{layer}}$ computationally prohibitive.

To resolve this limitation, we apply a low-rank factorization constraint to the quadratic probe. We hypothesize that the semantic concepts we wish to steer reside within a low-dimensional intrinsic subspace of the network's activations. Therefore, we restrict the symmetric matrix $ {W}_p$ to a low-rank approximation: ${W}_p \approx  \frac{1}{2}({U} {V}^\top+ V U^\top)$,\footnote{ $\frac{1}{2}({U} {V}^\top+ V U^\top)$ is the symmetric part of ${U} {V}^\top$ (or ${V} {U}^\top$).}
where $ {U,V} \in \mathbb{R}^{D \times R}$ are low-rank bottleneck matrices with rank $R \ll D$ (e.g., $R \in \{4, 8, 16\}$). Under this constraint:
\begin{equation}\label{eq:weight_new}
\begin{aligned}
 {W}_{\text{new}} = \left( {I} + \alpha ({U} {V}^\top+ {V}{U}^\top )\right) {W}_{\text{layer}} =  {W}_{\text{layer}} + \alpha \left(({U} {V}^\top+ {V}{U}^\top ) {W}_{\text{layer}}\right)
\end{aligned}
\end{equation}

Since $W_p = \frac{1}{2}(UV^\top+VU^\top)$, $f(x)$ becomes
\begin{equation}
\begin{aligned}
    f(x) &= {x}^\top {W}_p {x} + {w}_p^\top {x} + b_p = (U^\top x)^\top(V^\top x) + w_p^\top x+ b
\end{aligned}
\end{equation}
So $f(x)$ depends on the interaction between the directions spanned by $U$ and $V$. 

The low-rank extension cuts both memory and compute cost during the weight-folding phase. Computing the full-rank matrix multiplication $ {W}_p {W}_{\text{layer}}$ costs $\mathcal{O}(D^3)$ floating-point operations (FLOPs). By enforcing the low-rank constraint and altering the order of operations via the associative property in Equation~\eqref{eq:weight_new}, we compute $ {A} =  {V}^\top {W}_{\text{layer}}$ first, followed by $ {U} {A}$. This reduces the time complexity to $\mathcal{O}(D^2 \cdot R)$, transforming a computationally heavy offline workload into a lightweight step that computes in seconds.

\paragraph{Shrinkage GDA.} Theorem~\ref{theo:GDA}'s closed form is a population-level statement, however, and in practice $\Sigma_+, \Sigma_- \in \mathbb{R}^{D \times D}$ must be estimated from a finite training set: when the number of training examples is smaller than $D$ (as with $D=4096$ activations), the sample covariance is singular and $\Sigma_\pm^{-1}$ is undefined. We therefore replace each sample covariance with a doubly shrunken estimator~\citep{rda}
\begin{equation}
\widetilde{\Sigma}_\pm(\lambda)
\;=\;
(1-\lambda)\Bigl[\hat\Sigma_\pm + \hat\Sigma_{\mathrm{pool}}\Bigr]
\;+\;
\lambda\,\frac{\operatorname{tr}\!\bigl(\hat\Sigma_\pm\bigr)}{D}\, I ,
\qquad \lambda \in [0,1],
\label{eq:shrinkage}
\end{equation}
While $\lambda$ shrinks toward a scaled identity. The resulting estimator family contains that $\lambda \to 1$ drives both covariances to scaled identities and recovers a difference-in-means direction. This shrinkage family treats linear and quadratic probing as two ends of one regularization path.

\section{Experiments}
Our theoretical analysis argues that WAQS's quadratic probing should replace linear probing as the standard steering baseline: (1) quadratic probing introduces no inference cost through weight absorption, and (2) linear probing nests within our quadratic parameterization, since setting $W_p=0$ recovers the linear probe exactly. To confirm this claim holds in practice, we evaluate affine steering using quadratic probing against Difference-in-Means (DiffMean) and Linear Probing (LP), both of which use a constant steering vector that does not depend on the input. We report results over $5$ seeds unless otherwise stated. Additional training details are provided in Appendix~\ref{appx:add_exp_result}.

\subsection{Main Results for probing and steering}
\noindent \textbf{Truthfulness Steering:} We use the TruthfulQA dataset~\citep{truthfulqa}, which comprises questions designed to mimic human superstitions and misconceptions. Following the linear probing approach of \citet{iti}, we extract the intermediate activations at the last prompt token from the $14$th layer's $18$th attention head, which yields the highest probing accuracy. For open-ended generation, we employ GPT-4o as an automated judge to evaluate the responses. We prompt the judge to output a binary ``Yes'' or ``No'' indicating the truthfulness and informativeness of each generated answer (Appendix~\ref{appx:prompt_temp} details the prompt templates). We report \textbf{True} as the percentage of answers evaluated as factually correct. Additionally, we report \textbf{True*Info}, computed as the product of the truthfulness and informativeness percentages. This combined metric serves as a proxy for utility and penalizes the model for reaching high truthfulness through evasive, non-committal responses. $\textbf{MC1}$~\citep{iti} evaluates whether the model assigns the highest conditional probability to the lone true answer. $\textbf{MC2}$ calculates the total normalized conditional probability the model assigns to the set of all truthful answers relative to the false ones. To quantify the intervention's effect on general fluency, we compute \textbf{CE} (Cross-Entropy), the steered model's next-token prediction loss on a held-out corpus. We also calculate the \textbf{KL} divergence between the next-token probability distributions of the pre- and post-injection models over the same corpus. We sweep the steering strength $\alpha$ for each method and choose the $\alpha$ that achieves the best \textbf{MC1} score.

\textbf{Takeaway.} Table~\ref{tab:intervention_comparison_truthfulness} shows that Rank-1 Quadratic Probing improves \textbf{True*Info} and \textbf{True} over constant steering vectors on Llama-2-7B (20.43 vs.\ 13.28-19.30 True*Info), indicating better utility after steering. Quadratic Probing also achieves higher \textbf{MC1} and \textbf{MC2}, and almost best \textbf{MC3}, and its \textbf{CE} stay comparable compared with other methods. KL divergence stays between $0.005$ to $0.007$ for every method on LlaMA-2-7B, so none of the three steering methods materially shifts the next-token policy.

\begin{table}[htbp]
\centering
\caption{\textbf{Quadratic steering improves truthfulness while preserving fluency across model families.}
Results on TruthfulQA for five steering methods. We compare with Linear Probe, DiffMeans, Angular Steering, and Spherical Steering. For each method, $\alpha$ is swept and the value with the best MC1 is reported. True and Info are judged by GPT-4o.  }
\label{tab:intervention_comparison_truthfulness}

\footnotesize
\setlength{\tabcolsep}{4pt}
\renewcommand{\arraystretch}{1.1}
\resizebox{\columnwidth}{!}{%
\begin{tabular}{lccccccc}
\toprule
\textbf{Method} & \textbf{True*Info (\%)} $\uparrow$ & \textbf{True (\%)} $\uparrow$
& \textbf{MC1 (\%)} $\uparrow$ & \textbf{MC2 (\%)} $\uparrow$ & \textbf{MC3 (\%)} $\uparrow$
& \textbf{CE} $\downarrow$ & \textbf{KL} $\downarrow$ \\
\midrule
\grouprow{Llama-2-7B}
\quad Quadratic Probe & \textbf{20.43 \std 0.58} & \textbf{41.62 \std 1.03} & \textbf{35.37 \std 1.84} & \textbf{55.43 \std 2.89} & 29.73 \std 2.69 & 2.41 & 0.006 \\
\quad Linear Probe    &  13.28 \std 0.33 & 38.07 \std 0.59  & 33.78 \std 1.33 & 51.06 \std 2.77 & 26.99 \std 2.63 & \textbf{2.13} & \textbf{0.005}  \\
\quad DiffMeans       & 18.82 \std 0.28 & 39.53 \std 0.44 & 33.78 \std 1.33 & 51.12 \std 2.89 & 26.94 \std 2.54 & 2.17 & 0.005 \\
\quad Angular Steering & 18.77 \std 0.63& 39.41 \std 1.17 & 33.90 \std 1.50 & 51.07 \std 2.78 & 27.05 \std 2.66 & 2.37 & 0.007 \\
\quad Spherical Steering & 19.30 \std 0.76 & 40.02 \std 1.21 & 34.02 \std 4.10 & 53.51 \std 8.26 & \textbf{30.64 \std 7.50} & 2.54 & 0.006 \\

\midrule

\grouprow{Gemma-3-12B}
\quad Quadratic Probe & \textbf{20.68 \std 1.39} & \textbf{42.23 \std 2.20} & \textbf{51.41 \std 5.28} & \textbf{70.57 \std 3.22} & \textbf{43.99 \std 5.11} & \textbf{2.97} & 0.025 \\
\quad Linear Probe    & 16.02 \std 1.34 & 36.72 \std 2.02 &  48.84 \std 1.64 & 68.49 \std 0.38 & 40.58 \std 0.29 & 2.98 & \textbf{0.012}\\
\quad DiffMeans       & 16.36 \std 0.97 & 37.21 \std 1.88  & 50.68 \std 2.86 & 69.19 \std 1.51 & 42.18 \std 2.62 & 3.03 & 0.111\\
\quad Angular Steering  & 17.55 \std 1.48 & 37.94 \std 2.31 & 49.21 \std 1.82 & 69.12 \std 0.76 & 41.23 \std 0.79 & 3.04 & 0.354\\ 
\quad Spherical Steering & 17.10 \std 1.26 & 37.94 \std 2.18 &49.57 \std 2.68 & 68.33 \std 0.44 & 41.03 \std 0.12 & 3.04 & 0.142
\\
\bottomrule
\end{tabular}
}
\end{table}

\noindent \textbf{Refusal Steering:}
Following ~\citet{refusal}'s work, we construct the harmful dataset and harmless dataset based on \textsc{AdvBench}~\citep{advbench}, \textsc{MaliciousInstruct}~\citep{maliciousinst}, \textsc{TDC2023}~\citep{tdc2023}, \textsc{HarmBench}~\citep{harmbench}, and \textsc{Alpaca}~\citep{alpaca}. We evaluate on the MMLU~\citep{mmlu}, GSM8k~\citep{gsm8k} and ARC-Challenge~\citep{arc} to measure the utility of the steering methods. We also evaluate the safety performance using GCG~\citep{advbench}, TAP~\citep{tap}, and GPTFuzz~\citep{gptfuzz} as the attacking method to get the Attack Success Rate(ASR). 

\begin{table}[htbp]
\vspace{-5pt}
\centering
\caption{\textbf{Quadratic steering yields the most robust refusal behavior without sacrificing utility.}
Safety and utility of refusal steering on Llama-2-7B. Steering directions are estimated from harmful and harmless prompts built from AdvBench, MaliciousInstruct, TDC2023, HarmBench, and Alpaca, following \citet{refusal}. Quadratic Probe uses a rank-1 probe. Safety is the attack success rate (ASR) under three jailbreak attacks: GCG, TAP, and GPTFuzz. Utility is accuracy on MMLU, GSM8k, and ARC-Challenge.}
\small
\setlength{\tabcolsep}{3pt} 
\renewcommand{\arraystretch}{1.05} 

\begin{tabular}{lcccccc}
\toprule
& \multicolumn{3}{c}{Safety $(\downarrow)$ } 
& \multicolumn{3}{c}{Utility $(\uparrow)$} \\
\cmidrule(lr){2-4} \cmidrule(lr){5-7}

Method 
& GCG 
& TAP 
& GPTFuzz 
& MMLU 
& GSM8k 
& ARC-Challenge \\
\midrule

Quadratic Probe 
& 9 
& 4.0 
& 1.0 
& 47.30 
& 22.36 
& 49.57 \\

Linear Probe 
& 10 
& 4.4 
& 5.0 
& 47.27 
& 21.83 
& 49.23 \\

DiffMeans 
& 10 
& 7.8 
& 1.0 
& 47.23 
& 22.59 
& 48.97 \\

Angular Steering
& 9
& 4.2
& 1.0
& 47.31
& 22.51
& 49.23
\\

Spherical Steering
& 0
& 4.4
& 1.0
& 46.94
& 19.48
& 47.44

\\

\bottomrule
\end{tabular}
\label{tab:intervention_comparison_refusal}
\end{table}

Table~\ref{tab:intervention_comparison_refusal} presents the safety and utility performance of five steering methods on LLaMA-2-7B.\footnote{We conduct a grid search over the parameter $\alpha$ for each method; Appendix~\ref{appx:rank} details the procedure.} \textbf{Takeaway.} Across all three attacks, Rank-1 Quadratic Probing achieves the lowest ASR (9.0, 4.0, and 1.0 under GCG, TAP, and GPTFuzz respectively), beating both Linear Probing and DiffMeans. This suggests the non-linear decision boundary introduced by Quadratic Probing yields a refusal direction that resists all three attacks more effectively than a single fixed direction. Quadratic Probing also matches or exceeds Linear Probing and DiffMeans on all three utility benchmarks (MMLU, GSM8k, ARC-Challenge), so the safety gain does not cost general capability. DiffMeans and Linear Probing reach comparable utility but do not match Quadratic Probing's safety gains, indicating that the more expressive probing objective captures the harmful/harmless distinction without degrading benign-task behavior.

We also report the probing performance (i.e., the metrics for the linear and quadratic classifiers trained on harmful/harmless prompts). To assess the generalization of the learned probe, the test set includes harmful samples aggregated from HarmBench~\citep{harmbench}, AdvBench~\citep{advbench}, and StrongREJECT~\citep{strongreject}, and harmless samples from Alpaca dataset.

Table~\ref{tab:probing_results} presents the probing performance of linear probing and quadratic probing at ranks 1, 2, and 4.
\textbf{Takeaway.} Table~\ref{tab:probing_results} shows that linear and quadratic probes reach similar classification performance (92.9\% vs.\ 88.9--94.7\% accuracy across ranks). Combined with Table~\ref{tab:intervention_comparison_refusal}, this indicates that probing accuracy alone does not predict steering quality: a dynamic, context-dependent steering vector improves refusal steering even when the underlying probe classifies no better than a linear one. 

\noindent \textbf{Steering for Anthropic AI Risk Concepts:} We use Anthropic's Advanced AI Risk dataset~\citep{anthropic_ai_dataset} to steer the model's sycophancy, following the setup in CAA~\citep{CAA}. We evaluate steering with a next-token prediction metric that measures the change in the model's tendency to predict the sycophantic response token after the steering intervention. We compute the softmax over the logits of answer A and B and take the difference as the metric. We sweep $\alpha \in \{0.5, 1, 1.5, 2\}$ for all of the methods. All the steering trials are tested on the layer 24 to get better contrast. A more negative value under negative steering indicates stronger suppression of sycophantic behavior; a larger positive value under positive steering indicates stronger enhancement. \textbf{Takeaway.} Table~\ref{tab:anthropic_ai_risk} shows that quadratic steering achieves the strongest suppression of sycophancy, well beyond both linear steering and CAA. GDA Quadratic achieves the strongest positive steering effect, closely followed by Quadratic , while Linear and CAA produce only small positive changes ($0.0050$ and $0.0032$ respectively). Quadratic steering therefore controls sycophancy in both directions markedly more than linear steering (2.2$\times$ stronger suppression, 6.4$\times$ stronger enhancement) under the same evaluation setup used in CAA.

\begin{table}[t]
    \centering
    \caption{Steering effects across different behavioral concepts.
    Lower values indicate stronger negative steering, while higher values
    indicate stronger positive steering.}
    \label{tab:anthropic_ai_risk}
    \resizebox{\columnwidth}{!}{
    \begin{tabular}{llcc }
        \toprule
        Dataset / Concept & Method
        & Negative Steering decrease $\downarrow$ 
        & Positive Steering improvement$\uparrow$  \\
        \midrule

        \multirow{4}{*}{Sycophancy}
            & LR-Quadratic Probing & \textbf{-0.0780} \std 0.0039 & 0.0322 \std 0.0025 \\
            & GDA-Quadratic Probing & -0.0557 \std 0.0024 & \textbf{0.0326} \std 0.0101 \\
            & Linear Probing & -0.0358 \std 0.0037 & 0.0050 \std 0.0023 \\
            & Contrast Activation Addition & -0.0403 \std 0.0032 & 0.0074 \std 0.0037 \\
        \midrule

        \multirow{4}{*}{Corrigible}
            & LR-Quadratic Probing  & \textbf{-0.0361 \std 0.0035}  & 0.3815 \std 0.0555 \\
            & GDA-Quadratic Probing & -0.0360 \std 0.0113 & \textbf{0.4052 \std 0.0205} \\
            & Linear Probing        & -0.0322 \std 0.0084 & 0.3702 \std 0.0344 \\
            & Contrast Activation Addition           & -0.0331 \std 0.0075 & 0.3903 \std 0.0286 \\
        


        \bottomrule
    \end{tabular}
    }
\end{table}
We also report qualitative generations comparing methods on the same concept and prompt in Appendix~\ref{appx:axbench_qual}.

\subsection{Ablation Study}

\paragraph{Ablation Study on ranks.}
Table~\ref{tab:utility} reports the evaluation results across different ranks for Quadratic Probing on Llama-2-7b. Overall, the Quadratic method preserves general model utility and safety across all ranks, with MMLU scores remaining stable in the range of 47.20–47.34\%, indicating that the refusal-baking procedure introduces minimal degradation on general language understanding. For mathematical reasoning (GSM8K) and science question answering (ARC-Challenge), higher ranks tend to yield better performance. Meanwhile, rank=4 achieves the strongest safety performance on TAP (1.0) and GPTFuzz (0.5) with competitive utility scores, offering a favorable balance between safety and utility.

\paragraph{Frozen quadratic steering comparison.}
\label{sec:frozen_ablation}
The gains of Quadratic Probing over DiffMean and Linear Probing could come from two different sources: (i) the steering direction $2W_px+w_p$ adapts to each input's position relative to the concept boundary, or (ii) the quadratic probe simply learns a better \emph{constant} direction than a linear probe, independent of whether that direction is allowed to vary with $x$. To separate these two effects, we introduce a \textbf{Frozen Quadratic (Frozen-QP)} baseline that keeps the same trained probe parameters $(W_p, w_p)$ but replaces the per-input gradient with a single direction evaluated at the mean activation of the probe's training set:
\begin{equation}
    d_{\text{frozen}} = 2W_p\bar{x} + w_p, \qquad \bar{x} = \frac{1}{N}\sum_{i=1}^{N} x_i,
    \label{eq:frozen_direction}
\end{equation}
where $\{x_i\}_{i=1}^N$ are the same activations used to fit $(W_p, w_p)$. Because the gradient of a quadratic form is affine, $d_{\text{frozen}}$ is exactly the average of the per-example steering directions, $d_{\text{frozen}} = \tfrac{1}{N}\sum_{i=1}^N (2W_px_i+w_p)$, so this ablation changes nothing about the fitted probe itself -- it only removes the $x$-dependence at steering time, isolating adaptivity from probe expressiveness. We apply $d_{\text{frozen}}$ as a constant steering vector, $x_{\text{steered}} = x + \alpha\, d_{\text{frozen}}$, and select $\alpha$ using the same held-out-validation protocol as DiffMean and Linear Probing.

\begin{table}[htbp]
\centering
\caption{Dynamic vs.\ Frozen Quadratic Probing on TruthfulQA (Llama-2-7B). Frozen-QP uses the same trained $(W_p, w_p)$ as Quadratic Probe but replaces the per-input direction with the training-set-average direction of Eq.~\ref{eq:frozen_direction}.}
\label{tab:frozen_ablation}
\footnotesize
\setlength{\tabcolsep}{4pt}
\begin{tabular}{lccccc}
\toprule
\textbf{Method} & \textbf{True*Info (\%)} $\uparrow$ & \textbf{True (\%)} $\uparrow$ & \textbf{MC1 (\%)} $\uparrow$ & \textbf{MC2 (\%)} $\uparrow$ & \textbf{MC3} $\uparrow$ \\
\midrule
Quadratic Probe (dynamic)   & \textbf{20.43 \std 0.58} & \textbf{41.62 \std 1.03} & \textbf{35.37 \std 1.84} & \textbf{55.43 \std 2.89} & \textbf{29.73 \std 2.69} \\
Frozen-QP (avg.\ direction) & 13.47 \std  0.59 & 38.63 \std 1.32 & 33.66 \std 1.50 & 51.05 \std 2.77 & 26.93 \std 2.72 \\
Linear Probe & 13.28 \std 0.33 & 38.07 \std 0.59  & 33.78 \std 1.33 & 51.06 \std 2.77 & 26.99 \std 2.63 \\
DiffMeans   & 18.82 \std 0.28 & 39.53 \std 0.44 & 33.78 \std 1.33 & 51.12 \std 2.89 & 26.94 \std 2.54 \\
\bottomrule
\end{tabular}
\end{table}

\textbf{Takeaway.} The results show that the steering vector should adapt to the input for better performance. Once the $x$-dependence is removed, Frozen-QP falls back to the level of the constant baselines. On MC1, MC2, and MC3 it lies within 0.12 points of Linear Probe (33.66 vs. 33.78, 51.05 vs. 51.06, 26.93 vs. 26.99). On True*Info it is essentially tied with Linear Probe (13.47 vs. 13.28) and falls below DiffMeans (18.82). Because direction of Frozen-QP is exactly the average of the per-example directions, the two variants apply the same steering on average and differ only in how the direction varies across inputs. The quadratic probe therefore does not simply learn a better constant direction than a linear probe. Its advantage comes from adapting the steering direction to each activation, which weight absorption provides at no inference-time cost.
\section{Discussion and Limitations}

The ability to absorb affine steering generated by a quadratic prob  into model parameters depends on the location at which steering is applied. For activations produced directly by a linear transformation, the steering transformation can be folded into the corresponding weights. However, exact absorption is less straightforward when steering is applied directly to the residual stream. A residual transformation
\begin{equation}
    r_{\mathrm{steered}}
    =
    M_{\mathrm{steer}}r^{(l)} + \alpha w_p
\end{equation}
also transforms the identity skip path, which has no associated weight matrix into which $M_{\mathrm{steer}}$ can be directly absorbed. 
Consequently, residual-stream quadratic steering generally requires explicitly applying the steering transformation during inference. For a dense quadratic probe, this introduces an additional $\mathcal{O}(D^2)$ matrix--vector multiplication at each steered location. Structured parameterizations, such as diagonal or low-rank steering matrices, can reduce this overhead to $\mathcal{O}(D)$ or $\mathcal{O}(RD)$.

\section{Conclusion}
Quadratic probes provide input-dependent steering at the inference cost of a static linear vector. The gradient of a quadratic concept score is affine in the activation, so an offline weight update absorbs the entire intervention into the model's existing linear layers. Proposition~\ref{prop:maximal} shows quadratic probes are the largest function class with this property: any higher-degree probe reintroduces the inference-time cost that motivated dynamic non-linear steering. Quadratic probing extends linear probing rather than replacing it with a separate technique. Under a Gaussian representation model, the optimal quadratic probe decomposes the steering direction into a difference of class-conditional Mahalanobis pulls. Across truthfulness, refusal, and sycophancy steering, quadratic probing matches or exceeds constant linear steering and difference-in-means on task performance and model utility, and the absorbed implementation adds zero inference-time operations.

\subsection*{AI use statement}

In this work, we used generative AI tools for aiding/polishing writing and correcting grammatical errors and implementing methods. We have not used generative AI tools for generating scientific content, and generating synthetic data sets, helping develop theoretical models or conceptual frameworks, formulating mathematical claims, providing critical ingredients for proving mathematical claims, assisting in the writing of proofs, proposing or refine hypotheses, designing or providing feedback on research methodology or experiments,  assist with translation, cleaning and reformatting dataset, supporting qualitative and thematic data analysis, interpret results are not applicable to this work.

We have reviewed all AI-assisted work. We also review the code which is generated by AI and test with ourselves. We take responsibility for the final content of this work, including text, claims or artifacts produced with the aid of generative AI.

\bibliography{iclr2027_conference}
\bibliographystyle{iclr2027_conference}

\appendix

\section{Related Work}
\label{sec:related}

\noindent \textbf{Activation Steering.} \citet{zou2023representation} pioneered difference-in-means and linear-probe-derived vectors as strong, cheap steering baselines. \citet{iti} similarly derive a steering direction as the gradient of a trained linear probe; because a linear probe's gradient is constant, this direction does not depend on the current activation, so the resulting intervention is static despite its gradient-based derivation. Contrastive Activation Addition (CAA)~\citep{CAA} finds a difference-in-means direction from contrastive prompt pairs, and representation fine-tuning methods such as ReFT~\citep{reft} learn a fixed, input-independent intervention on the residual stream through gradient-based training. All of these methods share the same underlying assumption: the concept boundary is well approximated by a hyperplane, so a single fixed direction suffices for every input.

\noindent \textbf{Curved and Input-Dependent Steering.} More recent work argues that concept directions are better modeled as curved manifolds than hyperplanes~\citep{raval2026curveballsteeringrightdirection, dmdsp2026, hoscilowicz2024non}, motivating angular steering~\citep{vu2026angular} and spherical steering~\citep{sphericalsteering} variants. SOM~\citep{piras_mura_2026som} shows that a single direction cannot capture the steering effect, since it varies across multiple learned components. Cylindrical steering~\citep{cylindricalsteering} proposes a cylindrical representation hypothesis, showing that the steering vector is also sensitive to position within the same normal plane. These methods recover exactly the input-dependence that linear methods lack, but at a cost: extending \citet{iti}'s gradient-based recipe to a non-linear probe requires recomputing the gradient from the current activation at every generation step, reintroducing the per-token inference cost that motivated static steering in the first place.

\noindent \textbf{Training-free Weight Editing.} A separate line of work modifies model weights directly, without gradient-based training, to induce or remove targeted behaviors. Directional ablation methods identify a single linear direction associated with a behavior (e.g., refusal) and project it out of every weight matrix that writes to the residual stream, effectively "uninstalling" the behavior at the parameter level \citep{refusal_direction}. Task arithmetic constructs task vectors as the difference between fine-tuned and pretrained weights and edits models by adding or negating these vectors, again without further training \citep{ilharco2023iclr-editing}. Locate-and-edit approaches such as ROME~\citep{rome} and MEMIT~\citep{memit} identify and directly overwrite the weights responsible for specific factual associations. These methods share our goal of avoiding training-time and inference-time cost, but they operate on \textit{static, input-independent} weight edits, adding or removing a single direction regardless of the input.

WAQS closes this gap: extending \citet{iti}'s gradient-based recipe from a linear probe to a quadratic one keeps the resulting map affine (Proposition~\ref{prop:maximal}), so it folds into the model weights exactly like the static edits above, while remaining as input-dependent as the curved steering methods it replaces.

\section{Proof of Theorem~\ref{theo:GDA}}\label{appx:closed_form_sol}
\begin{proof}
Assume the hidden state of the positive and negative groups are respectively Gaussian distributed, as in Assumption~\ref{assumption:Gaussian}:
\begin{equation}
    x|y=+1 \sim \mathcal{N}(\mu_+,\Sigma_+), \quad x|y=-1 \sim \mathcal{N}(\mu_-,\Sigma_-), \qquad x \in \mathbb{R}^D.
\end{equation}

Under Assumption~\ref{assumption:Gaussian}, the Bayes-optimal discriminant score is the log-posterior-odds $g(x) := \log p(y_+ \mid x) - \log p(y_- \mid x)$; we identify the probe $f_Q(x)$ with this score, $f_Q(x) := g(x)$. By Bayes' theorem,
\begin{equation}
\begin{aligned}
    g(x) &= (\log p(x|y_+)+ \log p(y_+) - \log p(x)) - (\log p(x|y_-)+ \log p(y_-) - \log p(x))\\
    &=[\log p(x|y_+) - \log p(x|y_-)]+ [\log p(y_+) - \log p(y_-)]\\
\end{aligned}
\end{equation}

As the log likelihood could be written as:
\begin{equation}
    \begin{aligned}
        \log p(x \mid \mu_+,\Sigma_+ ) = -\frac{1}{2}(x-\mu_+)^\top \Sigma_+^{-1}(x-\mu_+) - \frac{1}{2}\log \lvert \Sigma_+\rvert+ C_1\\
        \log p(x \mid \mu_-,\Sigma_- ) = -\frac{1}{2}(x-\mu_-)^\top \Sigma_-^{-1}(x-\mu_-) - \frac{1}{2}\log \lvert \Sigma_-\rvert+ C_1
    \end{aligned}
\end{equation}
Here, $C_1 = -\frac{1}{2}(D\log(2\pi))$, where $D$ is the dimension of $x$ (consistent with Assumption~\ref{assumption:Gaussian}).
So
\begin{equation}
    g(x) = -\frac{1}{2}x^\top(\Sigma^{-1}_+ - \Sigma^{-1}_-)x +(\Sigma^{-1}_+\mu_+ - \Sigma^{-1}_-\mu_-)^\top x + C_2
\end{equation}

where $C_2 = -\frac{1}{2}(\mu_+^\top \Sigma^{-1}_+\mu_+ - \mu_-^\top \Sigma^{-1}_-\mu_- + \log{\lvert \Sigma_+\rvert} - \log{\lvert \Sigma_-\rvert})  + (\log p(y_+) - \log p(y_-))$

Matching coefficients against $f_Q(x) = x^\top W_px + w_p^\top x + b_p$ (Eq.~\eqref{eq:probe}) with $g(x) = f_Q(x)$ gives
\begin{equation}
    W_p = -\frac{1}{2}(\Sigma^{-1}_+ - \Sigma^{-1}_-), \quad w_p = \Sigma^{-1}_+\mu_+ - \Sigma^{-1}_-\mu_-, \quad b_p = C_2,
\end{equation}
which is exactly Theorem~\ref{theo:GDA}, since $C_2$ above is identical to the closed form for $b_p$ stated there.
\end{proof}

\section{Additional Experimental details}
\label{appx:add_exp_result}

\subsection{Experimental Setup}

\paragraph{Experiment Environment}We conduct all experiments on a server with two Intel Xeon 6326 \textbf{CPUs} and 6 Nvidia A6000 \textbf{GPUs}, using PyTorch version 2.4.1.

\textbf{Models and Representations.} Because our evaluation spans several behavioral steering tasks, we follow task-specific experimental settings and use different backbone models where appropriate. Within each task, all steering methods are evaluated on the same backbone and activation location to ensure a controlled comparison. We additionally conduct a broader cross-model evaluation on truthfulness steering, covering multiple model families and architectures. The exact backbone models and intervention locations used for each task are specified in the corresponding subsections.

\textbf{Probe estimation.} We evaluate both quadratic-probe estimators introduced in Section~\ref{subsec:qb_estimator} that we may use a discriminatively trained low-rank quadratic probing (LR-QP) or a shrinkage-based GDA quadratic probing (GDA-QP). 

\textbf{Steering and model selection.}
For each steering method, we tune the intervention strength $\alpha$ on a held-out validation set using the task-specific steering metric. We use the same search budget across methods and evaluate the selected configuration once on the test set. Task-specific search ranges and selection metrics are given in the corresponding subsections.

\textbf{Baselines and reporting.}
We compare primarily against Linear Probing (LP) and Difference-in-Means (DiffMean), which produce fixed steering directions. We additionally include task-specific baselines where appropriate.

\paragraph{Weight absorption applies across model positions.} Our method applies at any architectural location, so we probe and steer at seven different model positions on the TruthfulQA dataset.

\textbf{Takeaway.} Steering at \texttt{residual\_pre\_attn} (layer 23) achieves the best MC1 (0.300), while steering at \texttt{mlp\_output} (layer 31) achieves the best fluency (CE 1.647). Weight absorption therefore works across attention, MLP, and residual-stream positions, and the best position depends on which metric matters most.

\subsection{Prompt Templates}~\label{appx:prompt_temp}
\noindent \textbf{Truthfulness Score}

To ensure the reproducibility of our experiments, we provide the exact prompt templates used for our model evaluations in Table \ref{tab:intervention_comparison_truthfulness}. 

\begin{tcolorbox}[
    colback=gray!5, 
    colframe=gray!70, 
    title=\textbf{Box 1: Prompt Template for Informativeness},
    fonttitle=\bfseries,
    breakable,
    arc=2mm
]
\small
\textbf{System Prompt:} \\
You are an evaluator. Given a question and an answer, reply with exactly "yes" if the answer is true, or "no" if it is false or misleading. Reply with only "yes" or "no".      

\vspace{0.5em}
\textbf{User Prompt / Context:} \\
Q: What is the smallest country in the world that is at least one square mile in area? \\
A: Vatican City is the smallest country in the world that is at least one square mile in area.


\textbf{Response Format Instruction:} \\
Output your answer strictly in yes or no:
\begin{verbatim}
yes
\end{verbatim}
\end{tcolorbox}

\noindent \textbf{Informativeness Score}

\begin{tcolorbox}[
    colback=gray!5, 
    colframe=gray!70, 
    title=\textbf{Box 2: Prompt Template for truthfulness},
    fonttitle=\bfseries,
    breakable,
    arc=2mm
]
\small
\textbf{System Prompt:} \\
You are an evaluator. Given a question and an answer, reply with exactly "yes" if the answer is helpful and informative, or "no" if it is evasive, uninformative, or refuses to engage. Reply with only "yes" or "no".

\vspace{0.5em}
\textbf{User Prompt / Context:} \\
Q: What is the smallest country in the world that is at least one square mile in area? \\
A: Vatican City is the smallest country in the world that is at least one square mile in area.


\textbf{Response Format Instruction:} \\
Output your answer strictly in yes or no:
\begin{verbatim}
yes
\end{verbatim}
\end{tcolorbox}

\subsection{Computational Efficiency}
\begin{wraptable}[14]{r}{0.7\columnwidth}
\centering
\small
\begin{tabular}{lrrrr}
\toprule
Method & \#Params & Time/Probe (s) & Peak GPU (MB)  \\
\midrule
Linear Probing         & 4,097  & $0.749 \pm 0.029$ & 22.5 \\
\midrule
Quadratic (rank=1)  & 12,289  & $1.478 \pm 0.011$ & 22.5 \\
Quadratic (rank=2)  & 20,481 & $1.455 \pm 0.003$ & 22.6 \\
Quadratic (rank=4)  & 36,865 & $1.458 \pm 0.001$ & 22.7 \\
Quadratic (rank=8)  & 69,633 & $1.484 \pm 0.009$ & 23.0 \\
Quadratic (rank=16) & 135,169 & $2.043 \pm 0.311$ & 23.5 \\
\bottomrule
\end{tabular}
\caption{Computational cost comparison between Linear Probing and Quadratic Probing at varying ranks, benchmarked on a single probe (layer=12, pos=1) with $d_\text{model}=4096$, $n_\text{train}=256$, $n_\text{epochs}=200$.}
\label{tab:probe_efficiency}
\end{wraptable}
While quadratic probing introduces zero inference overhead, we examine the cost of training a quadratic probe.
Table~\ref{tab:probe_efficiency} summarizes the computational overhead of Linear Probing and Quadratic Probing across different low-rank configurations. Quadratic Probing at ranks 1–8 incurs roughly a $2\times$ increase in per-probe training time while keeping peak GPU memory nearly identical to Linear Probing. Overall, the quadratic formulation introduces a modest but acceptable efficiency cost, making it a practical choice when improved expressiveness is desired without substantially increasing memory footprint. 
\begin{wraptable}[7]{r}{0.56\columnwidth}
\centering
\caption{Computational cost comparison between Linear direction intervention and Quadratic Absorption on Llama-3.1-8B.}
\small
\resizebox{0.56\columnwidth}{!}{
\begin{tabular}{lrrrr}
\toprule
Method & tokens/s(prefill) & tokens/s(decode) & TTFT(ms)  \\
\midrule
Base   &  5040.4 \std 14.5      & 2636.7 \std 1.5 & 28.46 \std 0.08 \\
\midrule
Quadratic Absorption & 5037.5 \std 13.7 & 2636.7 \std 1.5 & 28.47 \std 0.08\\
Linear Intervention  & 4996.6 \std 14.5  & 2629.1 \std 1.7 & 28.57 \std 0.08\\
\bottomrule
\end{tabular}
}
\label{tab:inference_time}
\end{wraptable}

We also report the inference consumption under base, absorption and linear intervention(which is the cheapest intervention method) in Table~\ref{tab:inference_time}. From the table we can see that absorption shows almost same token generation speed while linear intervention would introduce distinguishable difference compared to them.

\begin{table}[ht]
\centering
\caption{Safety and Utility evaluation for different rank on Llama-2-7b. Best result per column is
\textbf{bold}.}   
\resizebox{\columnwidth}{!}{
\begin{tabular}{lcccccc}
\toprule
& \multicolumn{3}{c}{Safety $(\downarrow)$ } 
& \multicolumn{3}{c}{Utility $(\uparrow)$} \\
\cmidrule(lr){2-4} \cmidrule(lr){5-7}

Method 
& GCG 
& TAP 
& GPTFuzz 
& MMLU 
& GSM8k 
& ARC-Challenge \\
\midrule                                                                                                                                                                      
Quadratic (rank=1)  & \textbf{9} & 4.0 & 1.0 & 47.30 & 22.36 & 49.57 \\
Quadratic (rank=2)  & 10& 5.0& 1.0& \textbf{47.34} & 22.74 & 51.45 \\                                                                                      
Quadratic (rank=4)  & 10& \textbf{1.0} & \textbf{0.5} & 47.28 & 22.67 & 51.37 \\
Quadratic (rank=8)  & \textbf{9}& 4.0& 5 & 47.20 & \textbf{23.58} & \textbf{52.13} \\                                                                             
Quadratic (rank=16) & 11& 4.0& 2.5 & 47.21 & 23.12 & 51.28 \\
\bottomrule                                                         
\end{tabular}}
\label{tab:utility}
\end{table}

\subsection{Ablation Study on Ranks}\label{appx:rank}

\begin{wraptable}[10]{r}{0.6 \textwidth}
\caption{Probing performance of quadratic/linear probe.}
\centering
\begin{tabular}{lcccc}
\toprule
Probe &  Acc $\uparrow$ & AUC $\uparrow$ & F1 $\uparrow$ & AP $\uparrow$ \\
\midrule

linear\_probe      & 0.9286 & \textbf{0.9971} & 0.9461 & \textbf{0.9987} \\

quadratic\_rank1   &  0.8893 & 0.9921 & 0.9138 & 0.9967 \\
quadratic\_rank2   &  0.9165 & 0.9804 & 0.9365 & 0.9928 \\
quadratic\_rank4   & \textbf{0.9474} & 0.9956 & \textbf{0.9609} & 0.9981 \\
\bottomrule
\end{tabular}

\label{tab:probing_results}
\end{wraptable}

For each refusal baked-in method, we perform a grid search over $\alpha \in \{0.05, 0.1, 0.2, 0.5, 1.0\}$ and evaluate Refusal Score~\citep{refusal}, Over-Refusal Score, and Perplexity on a hold-out validation set. \textbf{Perplexity (PPL, $\downarrow$)} measures the model's language modeling capability on the Alpaca dataset after injection; lower values indicate better utility preservation. \textbf{Refusal Score ($\uparrow$)} is the log-odds of refusal tokens over compliance tokens at the first generated position on harmful instructions, i.e., $\log P(\text{refusal}) - \log P(\text{compliance})$, where higher values indicate stronger refusal behavior. \textbf{Over-Refusal Score ($\downarrow$)} is the same log-odds metric computed on harmless instructions, where lower values indicate less false-positive refusal on benign inputs. We select the $\alpha$ that maximizes the improvement in refusal score, subject to the Over-Refusal Score remaining within $[-13, -12]$ and perplexity not exceeding $6$, because the original model without intervention achieves a perplexity of $5.002$ and an Over-Refusal of $-13.219$.

Table~\ref{tab:utility} reports the evaluation results across different ranks for Quadratic Probing on Llama-2-7b. Overall, the Quadratic method preserves general model utility and safety across all ranks, with MMLU scores remaining stable in the range of 47.20–47.34\%, indicating that the refusal-baking procedure introduces minimal degradation on general language understanding. For mathematical reasoning (GSM8K) and science question answering (ARC-Challenge), higher ranks tend to yield better performance. Meanwhile, rank=4 achieves the strongest safety performance on TAP (1.0) and GPTFuzz (0.5) with competitive utility scores, offering a favorable balance between safety and utility.

\begin{table}[t] 
\centering
\small 
\caption{Activation steering results across different positions and layers. The baseline corresponds to $\alpha=0$.} \label{tab:activation_steering} 
\begin{tabular}{llrrrr}
\toprule
Position & Layer & $\alpha$ & MC1 $\uparrow$ & MC2 $\uparrow$ & CE $\downarrow$ \\
\midrule
residual\_pre\_attn             & 23 &  8  & 0.3000 & 0.2606 & 1.6604 \\
input\_layernorm\_out           & 31 & -8  & 0.2667 & 0.2523 & 1.6586 \\
attn\_output                    & 15 & -2  & 0.2667 & 0.2555 & 1.6617 \\
residual\_mid                   & 24 & 16  & 0.2833 & 0.2546 & 1.6603 \\
post\_attention\_layernorm\_out & 31 &  4  & 0.2833 & 0.2583 & 1.6579 \\
mlp\_output                     & 31 & 24  & 0.2667 & 0.2588 & \textbf{1.6470} \\
residual\_out                   & 31 &  2  & 0.2833 & 0.2556 & 1.6614 \\
\bottomrule
\end{tabular}
\end{table}

\subsection{Qualitative Steering Examples on AxBench}
\label{appx:axbench_qual}

To complement the quantitative AxBench results, we qualitatively compare the generations produced by each steering method for the same concept, the same input prompt, and a matched steering strength. 

\begin{tcolorbox}[
    colback=gray!5,
    colframe=gray!70,
    title=\textbf{Box 3: Example 1 -- [references and citations within text]},
    fonttitle=\bfseries,
    breakable,
    arc=2mm
]
\small
\textbf{Model:} [google/gemma-2-2b-it] \quad
\textbf{Layer / Position:} [10] \quad
\textbf{Direction:} [increase concept]

\vspace{0.5em}
\textbf{Prompt:} \\
{[Create a web scraper that extracts specific data from a given website
]}

\vspace{0.5em}
\renewcommand{\arraystretch}{1.2}
\begin{tabular}{@{}>{\raggedright\arraybackslash}p{0.22\linewidth} >{\raggedright\arraybackslash}p{0.70\linewidth}@{}}
\toprule
\textbf{Method} & \textbf{Generated Response} \\
\midrule
No steering (baseline) & [I can't create a web scraper for you directly. I'm a text-based AI, so I can't interact with websites or execute code. 

However, I can provide you with a conceptual outline and code snippets to get you started. 

**Understanding Web Scraping**

Web scraping] \\
DiffMeans               & [Let's dive into the world of web scraping! 

** What is web scraping?**

 Web scraping is the process of collecting data from websites using a variety of techniques and tools. It's a broad term encompassing a wide range of techniques and methods, but it's essentially about extracting information] \\
Linear Probe             & [It's impossible to give you a single, universal "web scraper" because there are many different ways to scrape data from websites, and the best approach depends on what you're trying to achieve. 

Here's a breakdown of some common web scraping techniques and what they're used for:] \\
Quadratic Probe (ours)   & [\textbf{Let's talk about web scraping!} 

**What is web scraping?**

Web scraping is the practice of extracting data from websites using automated tools and techniques. It's a powerful tool for gathering information, but it's not just about "scraping" - it's about understanding the different ways] \\
\bottomrule
\end{tabular}
\end{tcolorbox}

\begin{tcolorbox}[
    colback=gray!5,
    colframe=gray!70,
    title=\textbf{Box 4: Example 2 -- [references and citations within text]},
    fonttitle=\bfseries,
    breakable,
    arc=2mm
]
\small
\textbf{Model:} [google/gemma-2-2b-it] \quad
\textbf{Layer / Position:} [10] \quad
\textbf{Direction:} [increase concept]

\vspace{0.5em}
\textbf{Prompt:} \\
{[Olivia earns \$9 per hour.  She worked 4 hours on Monday, 3 hours on Wednesday and 6 hours on Friday.  How much money did Olivia make this week?
]}

\vspace{0.5em}
\renewcommand{\arraystretch}{1.2}
\begin{tabular}{@{}>{\raggedright\arraybackslash}p{0.22\linewidth} >{\raggedright\arraybackslash}p{0.70\linewidth}@{}}
\toprule
\textbf{Method} & \textbf{Generated Response} \\
\midrule
No steering (baseline) & [Here's how to calculate Olivia's earnings:

* **Monday's earnings:** \$9/hour * 4 hours = \$36
* **Wednesday's earnings:** \$9/hour * 3 hours = \$27
* **Friday's earnings:** \$9/hour *] \\
DiffMeans               & [**Olivia's earnings this week depend on the context.  **

** To answer your question, we need to know what kind of work is she doing?**

 **To get a better understanding of how much money Olivia makes, we need to know what kind of work she does.**] \\
Linear Probe             & [Olivia earned **\$9 per hour** x **4 hours** = **\$36** on Monday. 

To get the full picture, you need to consider the rest of the week:

* **Olivia earned \$9 per hour x 4 hours** = **\$36** on Wednesday. ] \\
Quadratic Probe (ours)   & [It's impossible to say for sure how much money Olivia made this week without knowing her full name and what kind of work she does. 

**To answer your question, we need to know more about Olivia's work:**

* **What kind of work does Olivia do?**  
* **] \\
\bottomrule
\end{tabular}
\end{tcolorbox}



\end{document}